\def\ci{\perp\!\!\!\!\!\perp}
\newtheorem{proposition}{Proposition}
\title{
	A Simple Unified Approach to Testing High-Dimensional Conditional Independences
	for Categorical and Ordinal Data
}
\author{
    %Authors
    % All authors must be in the same font size and format.
	Ankur Ankan\textsuperscript{\rm 1}, Johannes Textor\textsuperscript{\rm 1}
}
\title{My Publication Title --- Single Author}
\author {
    Author Name
}
\title{My Publication Title --- Multiple Authors}
\author {
    % Authors
    First Author Name,\textsuperscript{\rm 1,\rm 2}
    Second Author Name, \textsuperscript{\rm 2}
    Third Author Name \textsuperscript{\rm 1}
}
\begin{document}

\maketitle

\begin{abstract}
	Conditional independence (CI) tests underlie many approaches to model
	testing and structure learning in causal inference. Most existing CI
	tests for categorical and ordinal data stratify the sample by the
	conditioning variables, perform simple independence tests in each
	stratum, and combine the results.  Unfortunately, the statistical power
	of this approach degrades rapidly as the number of conditioning
	variables increases.  Here we propose a simple unified CI test for
	ordinal and categorical data that maintains reasonable calibration and
	power in high dimensions. We show that our test
	outperforms existing baselines in model testing and structure learning
	for dense directed graphical models while being comparable for sparse
	models. Our approach could be attractive for
	causal model testing because it is easy to implement, can be used
	with non-parametric or parametric probability models, has the symmetry
	property, and has reasonable computational requirements.
\end{abstract}

\section{Introduction}

Scientific claims should be falsifiable.
In causal inference, falsifiable claims can be read off graphical
causal models using graphical criteria. Many types of 
graphical models including DAGs, MAGs, undirected 
graphical models, and chain graphs imply conditional 
independences (CIs). Therefore, statistical CI tests play an 
important role in model testing and structure learning -- which itself 
can be seen as a sequence of iterative model tests and post-hoc modifications
performed by an algorithm.

Unfortunately, compared to simple (unconditional) independence testing, CI
testing is much harder; for example, a non-parametric CI test for continuous data that
is both calibrated and has power 
does not exist \citep{Bergsma2004,shah2020hardness}.
Some assumptions therefore need to be imposed on the relationships between the involved variables. A large 
amount of work has been done on quantifying conditional dependence
using measures such as mutual information \citep{cover1999elements}, the
Hilbert-Schmidt independence criterion \citep{gretton2005measuring}, and
distance covariance \citep{szekely2007measuring}; see also
\citet{josse2014measures} for an overview. A wide variety of CI tests
has been proposed based on concepts such as ranks \citep{weihs2018symmetric}, kernel
methods \citep{pfister2016kernel}, copulas \citep{kojadinovic2009tests}, knock-off
sampling \citep{watson2019testing}, nearest neighbors \citep{berrett2019nonparametric},
and generalized covariance measures \citep{shah2020hardness}. 

Due to the prominent role of the structure learning problem in the causal inference
literature, CI tests are often developed and evaluated with structure learning 
in mind. In applied literature, however, structure 
learning is not yet widely used and graphical causal models are often
constructed by hand. For example, a recent review of the use of DAGs in health research
\citep{Tennant2019} found hundreds of papers in which DAGs were constructed, mainly to inform covariate adjustment strategies.
Perhaps surprisingly, none of these 
DAG models was tested against the dataset it was supposed to represent, posing a severe
risk for inferences based on these models -- it seems unlikely that researchers
can come up with a correct graphical structure based on their 
intuition and domain knowledge alone. 

We suspect that perceived or real issues with existing CI tests are part of the reason that
DAG model testing and structure learning aren't more widely used.
We argue that a CI test for practical use should have the following properties: 
\begin{enumerate}
\item it should be \emph{simple} in the sense that it is based on elementary statistical concepts that most researchers are familiar with;
\item it should be \emph{symmetric} -- tests of $X \ci Y \mid Z$ and $Y \ci X \mid Z$
should deliver the same result;
\item it should be \emph{computationally efficient} since even for hand-constructed models,
it can be necessary to perform hundreds of tests (at least one per missing edge); and 
\item it should have reasonable calibration and power in real-world data.
\end{enumerate}

For continuous data, one could argue that all these conditions are fulfilled by a simple
test where we perform two regressions (not necessarily linear ones)
 $E[X \mid Z]$ and $E[Y \mid Z]$, and 
determine the correlation between their residuals, which should be 0 under CI 
if our regressions accurately model the conditional expectation \citep{thoemmes18_pm}.
Unfortunately, many important datasets do not consist of only continuous variables.
CI testing for categorical and ordinal data has received considerably less attention in the
 literature, perhaps because from a theoretical point of view, it appears to be 
a much simpler problem: CI testing for such data can be done by essentially
stratifying the sample according to $Z$, performing separate CI tests in each stratum,
and combining the results
(see also Remark 4 in \citet{shah2020hardness}). Since there are only finitely many strata,
such tests can be non-parametric, calibrated, and have power against meaningful alternatives
at the same time.  

\begin{figure}
\begin{subfigure}{.5\textwidth}
\centering
\begin{tikzpicture}
\tikzstyle{every node}=[align=center, inner sep=1pt]
\node at (0,0) {Income \\ \small(Incm)};
\node at (2,0) {Workclass \\ \small(Wrkc)};
\node at (4,0) {Education \\ \small(Edct)};
\node (mrts) at (6,0) {Marital~Status \\ \small(MrtS)};

\node at (0,-1) {Occupation \\ \small(Occp)};
\node (rltn) at (4,-1) {Relationship \\ \small(Rltn)};
\node at (2,-1) {Race};
\node (sex) at (6,-1) {Sex};

\node (hrpw) at (0,-2) {Hours~Per~Week \\ \small(HrPW)};
\node at (4,-2) {Native~Country \\ \small(NtvC)};
\node (age) at (2,-2) {Age};
\draw [very thick] (sex) -- (rltn);
\draw [very thick] (mrts) -- (rltn);
\draw [very thick] (hrpw) -- (age);
\end{tikzpicture}
\caption{}
\end{subfigure}
\begin{subfigure}{.5\textwidth}
\centering
\begin{tabular}{llrr}
\multicolumn{4}{c}{Z=Age, Sex} \\
$X$ & $Y$ & p & df \\
\hline
Edct & Wrkc & 1.00 & 1050 \\
Occp & Wrkc & 1.00 & 840 \\
Rltn & HrPW &  .99 & 210 \\
Incm & Occp & .08 & 168 \\
Incm & Wrkc &  .50 & 70 \\
Incm & HrPW & .003 & 42 \\
\hline
\end{tabular}
\caption{}
\end{subfigure}
\caption{(a) Skeleton estimated by the stable PC algorithm \citep{Colombo2014} from 1000 samples of 
the adult income data using the default 
CI test in the R package `bnlearn', a stratified mutual information test.
Almost no variables are connected even though there are substantial pairwise relationships
between most variables in the data.
(b) A closer inspection of test results reveals high degrees of 
freedom that sometimes exceed the sample size. Such tests are strongly biased towards
independence because very little information is used per stratum.}
\label{fig:pcerror}
\end{figure}

To our knowledge, there currently exists no CI test for discrete and ordinal
data that meets the above criteria. 
For illustration,
consider the widely known ``adult income'' or ``US census income'' data \citep{Dua2019,kohavi1996} that 
contains rich categorical variables such as ``Native Country'' (41 categories),
 ``Education'' (16), and ``Occupation`` (14),
along with ordinal variables such as ``HoursPerWeek'' 
and ``Income'' (binarized at cutoff \$50K/year).
Like for many sociological datasets, most pairs of variables
are substantially but not strongly dependent, and these dependences are not easily
``explained away'' by conditioning on other variables. In other words, although there is
no ``true structure'' known for this dataset, any reasonable structure should be dense.
Yet, structure learning based on stratification-based CI tests 
typically returns very sparse graphs 
(Figure~\ref{fig:pcerror}a). This happens because as low-dimensional CI tests rightly fail to identify
any independences, higher dimensions will be considered and at some point the tests will
become unreliable (Figure~\ref{fig:pcerror}b). Thus, for high-dimensional
data, the mutual information test and related tests such as chi-square and $G^2$
fulfill our first 3 desiderata but not the 4th.

In our experience, such issues are not pathological edge cases, but
are routinely obtained when applying ``default'' constraint-based structure learning algorithms 
to real-world datasets containing discrete variables (as many do); 
indeed, this is a frequent source of confusion and frustration
for first-time users or students trying to get acquainted with causal inference methodology. 

This paper proposes a simple CI testing approach for categorical and
ordinal data that fulfills our desiderata and outperforms calibration 
and power of state-of-the-art methods for
high-dimensional conditioning sets. Our approach combines a residual for 
ordinal data \citep{Li2012} with a multidimensional location test, Hotelling's $T^2$ test.
We can use any suitable estimator of conditional probabilities; here, we 
show results using logistic regression and random forests. 

% We do this by extending the
% GCM for categorical and the ordinal data, leading to a chi-square
% distributed test statistic. As with GCM, our approach works with any suitable
% estimator of conditional probabilities as long as it converges at a rate faster
% than $ \mathcal{O}(\frac{1}{\sqrt{n}}) $. For theoretical soundness, we first
% show our empirical results using logistic regression as it has povable
% convergence rates. We also show the results using random forest for which the
% convergence rates aren't provable but given it's inherent ability to disregard
% irrelevant information in the conditioning variables, it is able to perform
% quite well.
% 
% 
% We do this by combining a residualization approach for 
% ordinal data \citep{Li2012} with a multidimensional location test, Hotelling's $T^2$ test,
% that leads to a chi-square distributed test statistic. Our approach works with any 
% suitable estimator of conditional probabilities. For the empricial analysis, we show results
% using logistic regression and random forest as the estimators. Since
% 
% show our approach
% using logistic regression and random forest as the estimators. Since convergence rates can only
% be proved for logistic regression, we show the results using it. The random forest is able to 
% perform better but we use random forests throughout
% for their simplicity, their lack of parametric assumptions, and their inherent ability 
% to disregard irrelevant information in the conditioning variables.

\section{Background \& related work}

We consider one-dimensional discrete or ordinal variables $X, Y$ 
and a possibly multi-dimensional discrete or ordinal 
variable $Z = Z_1,\ldots,Z_k$ with joint probability 
density $p(x,y, z)$, and write $\mathbf{x}=(x_1,\ldots,x_n)$ for a sample from 
$X$ of size $n$. 
We write the expectation of a variable $X$ as $\mathbb{E}[X]$, conditional
expectations as $\mathbb{E}[X \mid Z]$,
the covariance between $X$ and $Y$ as $\textbf{cov}(X,Y)$, 
%their covariance as $\textbf{cov}(X,Y)$,
the variance of $X$ as $\textbf{var}(X)$,
and the covariance estimated from samples $\mathbf{x},\mathbf{y}$ as
$\textbf{cov}(\mathbf{x},\mathbf{y})$.
We say that $X$ and $Y$ are conditionally independent given $Z$, or $X \ci Y \mid Z$,
if  for all $z$ with $p(z)>0$, $p( x,y \mid Z=z) =
 p(x \mid Z=z )p(y \mid Z=z )$ \citep{Dawid1979}.

We can roughly categorize CI tests into three groups.  First,
\emph{stratification tests} split the data into subsets according to $Z$,
perform a marginal independence test $X \ci Y$ within each subset, and combine
the results.  This approach is natural for discrete $Z$, but can also be
applied to continuous $Z$ upon binning. Such tests are relatively simple and
usually symmetric by construction, but they rapidly lose power when $Z$ becomes
high-dimensional, even if irrelevant variables are added to $Z$. Some such
tests like chi-square also lose validity altogether for smaller datasets with
high-dimensional $Z$ because they are based on asymptotic statistics and
stratification can lead to only a few samples available for each individual
marginal test. This issue can be addressed by using exact tests instead
\citep{Tsamardinos2010}, improving calibration but not necessarily power.

Second, \emph{variable importance tests} compare a probability model
$\hat{p}( x \mid y, z )$ to a simpler model $\hat{p}( x \mid z )$ based on some
goodness-of-fit metric. If the simpler model does not fit substantially 
worse, one accepts the claim $X \ci Y \mid Z$. This approach is attractive because
it can leverage any statistical model with a reasonable goodness of fit metric
or nested model test; e.g., we could perform such 
a test for binary data simply by fitting logistic regressions $X \sim Y + Z_1 + \ldots + Z_k$ and 
examining the coefficient of $Y$ and its sampling error. 
A downside of this approach is its inherent asymmetry:  depending on the probability model used,
a test of $X \ci Y \mid Z$ could yield a different result than a test of 
$Y \ci X \mid Z$, which could be confusing because CI is a symmetric property. 

Third, \emph{residualization tests} fit two models 
$\mathbb{E}[ X \mid Z ]$ and $\mathbb{E}[ Y \mid Z ]$, and examine the relationships 
between the residuals $R_{x_i} = x_i - \mathbb{E}[ X \mid Z = z_i ]$ 
and $R_{y_i} = y_i - \mathbb{E}[ Y \mid Z = z_i ]$.
The validity of these tests rests on a theorem by 
\citet{Daudin1980}, which implies
that when $X \ci Y \mid Z$ and residuals are valid
($\mathbb{E}[R_{X}] = \mathbb{E}[R_{Y}] = 0$), then $\mathbb{E}[R_{X} R_{Y}]=0$.
Therefore, we can  test CI by examining a multiplicative association measure
between $R_{X}$ and $R_{Y}$,
which should be 0 under CI.
Such measures include correlation or the generalized covariance
measure \citep{shah2020hardness}. This approach has the attractive feature that 
it is symmetric by construction. Instead, we can also conduct
CI tests by attempting to predict $R_{X}$ from $R_{Y}$ or vice versa
\citep{Shah2017,HeinzeDeml2018}; such tests are not necessarily symmetric. 

Most existing CI tests for categorical data are based on stratification. 
This includes chi-square and $G^2$/mutual information based tests such as 
those implemented in the R packages `bnlearn' \citep{Scutari2014} and 
`pcalg' \citep{Kalisch2012}. \citet{Tsamardinos2010} show how the calibration and 
power of such tests can be improved by using exact versions or their Monte Carlo
approximations. More recently, \citet{MarxV19} proposed 
a variable importance test called \emph{SCCI} that uses an approximation
to Kolmogorov complexity. We will use SCCI as a modern baseline for comparison, although
that comparison is not always straightforward because SCCI only provides a 
pseudo p-value without calibration guarantees. We are not aware of existing dedicated 
residualization tests for categorical or ordinal data -- 
for example, at present, none of the tests implemented in the R package 
`CondIndTests' \citep{HeinzeDeml2018}
will run on fully categorical data where every variable has more than 2 levels.
We could of course leverage existing residualization tests by dummy-coding all
categorical and binary data, performing multiple comparisons, and somehow combining
the results; however, this procedure would result in information loss for ordinal data,
and it is not necessarily obvious how the individual results would have to be
combined to maintain calibration under the null and to obtain meaningful effect sizes.
We therefore leave this comparison for future work. 

\section{Test development}

Here we propose a CI test for categorical and ordinal data that is based on the residualization
approach. The main issue with developing such a test is that there is no straightforward
definition of a residual for categorical or ordinal data, since subtraction is 
meaningless for such variables. Throughout we consider a CI test between 
an ordinal or categorical variable $X$ with $k$ levels, an ordinal or categorical variable
$Y$ with $r$ levels, and a set of ordinal or categorical conditional variables $Z$.

\subsection{Residual}

We will use a uniform residual for all tests. For an observation $y$ of an ordinal (possibly binary),
we use the residual for ordinal data by \citet{Li2012}. Given 
a sample $\mathbf{y}$ of $Y$ and an estimate $\hat{p}(y)$ of the distribution 
$p(y)$, this Li-Shepherd-residual (LS-residual) is defined as
$$
R_{ y_i } = \hat{p}( Y < y_i ) - \hat{p}( Y > y_i ) \ .
$$
Although LS-residuals generally do not (and cannot) have the observed-minus-expected (OME) form
that is typically associated with a residual, 
they do share important properties with OME residuals \citet{Li2012}.
The exception is binary $Y \in \{0,1\}$, in which the LS-residual 
does have the OME form and reduces to the standard OLS residual
for binomial variables, i.e., 
$$
R_{ y_i } = y_i - \hat{p}( Y = 1 )\ .
$$
Similarly, the conditional residual for samples $(\mathbf{y},\mathbf{z})$
is defined as 
$$
R_{ y_i \mid z_i } = \hat{p}( Y < y_i | Z = z_i ) - 
	\hat{p}( Y > y_i | Z = z_i ) 
$$

\subsection{Test statistic}

We now define test statistics for each possible combination of ordinal 
and categorical variables that we can encounter. These test statistics
are closely related to each other. In each case, we assume 
that residuals are formed with respect to the test in question; for example, 
if we test $X \ci Y \mid Z$, then $R_{\mathbf{x}}$ is based on 
$\hat{p}( x \mid \mathbf{z} )$. Here we will define the test statistics
and derive their asymptotic distributions; throughout, our proofs are adapted/generalized
versions of the proof in \citet{lishepherd2010}, which is based on M-estimation
theory and the delta method. Therefore, our asymptotic results require the assumption 
that an M-estimator is used to estimate the conditional probabilities 
$\hat{p}( x \mid \mathbf{z})$ and $\hat{p}( y \mid \mathbf{z} )$. 

We begin with the simplest case. If both variables are  
ordinal, we use the following test statistic, 
which is the squared generalized covariance measure (GCM) \cite{shah2020hardness}:

$$ Q_1(\mathbf{x},\mathbf{y}) = \frac{1}{n} 
	\frac{ \left( R_{\mathbf{x}} \cdot R_{\mathbf{y}} \right)^2 }{ \textbf{var}( R_{\mathbf{x}}  R_{\mathbf{y}} ) } \ . $$ 
\begin{proposition}
\label{prop:q1}
If $X \ci Y \mid Z$, then asymptotically $Q_1(\mathbf{x},\mathbf{y}) \sim \chi^2(1)$. 
\end{proposition}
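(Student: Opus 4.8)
The plan is to follow the M-estimation/delta-method route indicated in the text and to recognize $Q_1$ as the square of a studentized generalized covariance statistic, so that it suffices to show its signed root is asymptotically standard normal. Writing the products as $a_i = R_{x_i}R_{y_i}$, with $\bar a = \tfrac1n\sum_i a_i$ and $\widehat\sigma^2 = \textbf{var}(R_{\mathbf{x}}R_{\mathbf{y}})$ the empirical variance of the $a_i$, the statistic rearranges to $Q_1 = (\sqrt n\,\bar a)^2/\widehat\sigma^2 = S_n^2/\widehat\sigma^2$ with $S_n = n^{-1/2}\sum_i a_i$. I would therefore establish three facts: (i) $S_n \Rightarrow N(0,\sigma^2)$ with $\sigma^2 = \mathbb{E}[R_X^2 R_Y^2]$; (ii) $\widehat\sigma^2 \to \sigma^2$ in probability; and (iii) conclude by Slutsky and the continuous-mapping theorem that $Q_1 \Rightarrow \chi^2(1)$.

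First I would set up the nuisance estimation. Collecting the parameters of the two fitted models $\hat p(x\mid\mathbf z)$ and $\hat p(y\mid\mathbf z)$ into $\hat\theta=(\hat\alpha,\hat\beta)$, the M-estimation assumption supplies $\sqrt n(\hat\theta-\theta_0)=O_p(1)$ and asymptotic normality. Treating the residuals as smooth functions $R_x(\cdot,\alpha)$, $R_y(\cdot,\beta)$ of the data $W_i=(x_i,y_i,z_i)$ and the parameters, I would Taylor-expand
$$ S_n = \frac{1}{\sqrt n}\sum_i R_x(W_i,\alpha_0)R_y(W_i,\beta_0) \;+\; \widehat G_n^{\top}\sqrt n(\hat\theta-\theta_0) \;+\; o_p(1), $$
where $\widehat G_n$ estimates $G=\mathbb{E}\big[\nabla_\theta\{R_x R_y\}\big]\big|_{\theta_0}$. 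The first (oracle) term uses the true conditional laws, under which the LS-residual enjoys the conditional mean-zero property $\mathbb{E}[R_Y\mid Z]=0$: for $Z=z$ with level probabilities $q_j$, a short telescoping calculation gives $\mathbb{E}[R_Y\mid Z=z]=(1-\sum_j q_j^2)+\sum_j q_j^2-1=0$, and likewise for $X$. Combined with $X\ci Y\mid Z$ this yields $\mathbb{E}[R_X R_Y]=\mathbb{E}_Z\big[\mathbb{E}[R_X\mid Z]\,\mathbb{E}[R_Y\mid Z]\big]=0$, so the oracle term is a normalized sum of i.i.d.\ mean-zero terms and the ordinary CLT gives $N(0,\sigma^2)$ with $\sigma^2=\mathbb{E}[R_X^2R_Y^2]$.

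The hard part is controlling the second term, i.e.\ showing that estimating the nuisances is asymptotically negligible. Because the two models are fitted separately, $R_x$ depends only on $\alpha$ and $R_y$ only on $\beta$, so $G$ splits into $\mathbb{E}[(\nabla_\alpha R_x)R_Y]$ and $\mathbb{E}[R_X(\nabla_\beta R_Y)]$. Factorizing each expectation over $Z$ via conditional independence and invoking the same conditional mean-zero property ($\mathbb{E}[R_Y\mid Z]=0$, $\mathbb{E}[R_X\mid Z]=0$) forces $G=0$; this is the orthogonality-type cancellation that also underlies the Daudin (1980) result cited above. Since $\widehat G_n\to G=0$ while $\sqrt n(\hat\theta-\theta_0)=O_p(1)$, the correction term is $o_p(1)$ and $S_n$ behaves like its oracle counterpart. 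The same negligibility argument then delivers consistency of the plug-in variance, $\widehat\sigma^2\to\sigma^2$, and Slutsky gives $Q_1=S_n^2/\widehat\sigma^2\Rightarrow\chi^2(1)$. I expect the bookkeeping in (iii) to be routine; the genuine obstacle is verifying $G=0$ under enough regularity (dominated-convergence/continuity conditions on the residual gradients and uniform control of $\widehat G_n$) to justify the delta-method expansion.
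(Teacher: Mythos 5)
Your proposal is correct, and at its core it performs the same asymptotic linearization as the paper, but the packaging differs in a way worth spelling out. The paper follows the Li--Shepherd template literally: it registers $\theta_1 = \mathbb{E}[R_X R_Y]$ and $\theta_2 = \mathbb{E}[(R_X R_Y)^2]$ as \emph{additional parameters} of a stacked M-estimator (their moment equations sit alongside the score equations of the two conditional models), then applies the delta method to $g(\bm{\theta}) = \theta_1/\sqrt{\theta_2 - \theta_1^2}$, chosen precisely so that $Q_1 = (\sqrt{n}\, g(\hat{\bm{\theta}}))^2$, and computes the sandwich variance of $\sqrt{n}\, g(\hat{\bm{\theta}})$ to be $1$; studentization is thus absorbed into the delta-method function rather than handled by Slutsky. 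Your decomposition $Q_1 = S_n^2/\hat{\sigma}^2$, with an oracle-plus-correction expansion of $S_n$ and a separate consistency argument for $\hat{\sigma}^2$, is instead the Shah--Peters style of proof for the GCM, except that you replace their rate condition on $\hat{p}$ by $\sqrt{n}$-consistency from M-estimation theory --- which is exactly the paper's standing assumption, so the two sets of hypotheses match. The most valuable part of your write-up is the step the paper leaves tacit: in the stacked system, the matrix $A(\bm{\theta})$ has a cross block $\mathbb{E}\bigl[-\partial (R_X R_Y - \theta_1)/\partial(\bm{\theta}^{\bm{X}}, \bm{\theta}^{\bm{Y}})\bigr]$, which is precisely $-G$ in your notation; the paper's computation $A(\bm{\theta}) = \mathbb{I}_2$, and hence its identification of the relevant variance with the $(\theta_1,\theta_2)$ block of $B(\bm{\theta})$, is legitimate only because $G = 0$. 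You actually verify this cancellation, via the conditional mean-zero property $\mathbb{E}[R_X \mid Z] = \mathbb{E}[R_Y \mid Z] = 0$ of correctly specified LS-residuals together with the factorization afforded by $X \ci Y \mid Z$ --- the Daudin-type orthogonality on which both proofs ultimately rest. In exchange, the paper's stacking makes the variance bookkeeping mechanical and extends verbatim to the vector-valued statistics $Q_2$ and $Q_3$ (where your Slutsky step would become a multivariate normalization by $\hat{\Sigma}_d^{-1/2}$), whereas your version is more transparent about why estimating the nuisance models is asymptotically free. Both arguments share the same remaining caveats: correct specification of the conditional probability models (so the residuals are valid at the true parameters) and enough smoothness of the residual maps in the parameters to justify the expansion.
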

\begin{proof}
	\citet{shah2020hardness} prove that the non-squared version of $Q_1$ is asymptotically standard normal.
	However, here we show this instead by slightly adapting the proof in~\citet{lishepherd2010}, which is
	simpler and generalizable to higher dimensions in an intuitive and straightforward manner. 
	See Appendix for details. One important difference is that the proof by~\citet{shah2020hardness} 
 	is based on an assumption that the estimator $\hat{p}$ converges ``quickly enough'', whereas ours is 
	based on M-estimation theory. 
\end{proof}

Next, consider categorical $X$ with $k$ indexed categories and ordinal 
$Y$. For the sample $\mathbf{x}$ we define the binary 
\emph{indicator} variables (also known as ``dummy variables'') $\mathbb{I}(x_i=j), 1 \leq j \leq k$ where
$\mathbb{I}(x_i=j)=1$ if $x_i=j$ and $\mathbb{I}(x_i=j)=0$ otherwise.
We now consider all dot products between the ordinal residuals $R_\mathbf{y}$ and the residuals for the first $k-1$ dummy variables of $X$, 
$$
d = (R_{\mathbb{I}(\mathbf{x}=1)} \cdot R_{\mathbf{y}}, \, \ldots \ ,
R_{\mathbb{I}(\mathbf{x}=k-1)} \cdot R_{\mathbf{y}}
)
$$
and use it to define our test statistic analogously to a Hotelling's test:
$$
Q_2(\mathbf{x},\mathbf{y}) = \frac{1}{n} \left( d \times \hat{\Sigma}_d^{-1} \times d^T \right) \ ,
$$
where the matrix $\hat{\Sigma}_d$ contains the estimated covariances between the components of $ (R_{\mathbb{I}(\mathbf{x}=1)} \odot R_{\mathbf{y}}, \, \ldots \ , R_{\mathbb{I}(\mathbf{x}=k-1)} \odot R_{\mathbf{y}})$ (here, $\odot$ denotes the element-wise product). Note that we drop one of the dummy variables (without information loss) because otherwise, $\hat{\Sigma}_d$ would not be full rank. 
\begin{proposition}
\label{prop:q2}
If $X \ci Y \mid Z$, then asymptotically $Q_2(\mathbf{x},\mathbf{y}) \sim \chi^2(k-1)$. 
\end{proposition}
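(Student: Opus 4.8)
The plan is to reduce the statement to a standard fact about quadratic forms of Gaussian vectors. Writing $P_i = (R_{\mathbb{I}(x_i=1)} R_{y_i}, \ldots, R_{\mathbb{I}(x_i=k-1)} R_{y_i})$ for the vector of element-wise products of the $i$-th observation, we have $d = \sum_i P_i$ and $Q_2 = (\tfrac{1}{\sqrt n} d)\,\hat{\Sigma}_d^{-1}\,(\tfrac{1}{\sqrt n} d)^T$, where $\hat{\Sigma}_d$ is the sample covariance of the $P_i$. So it suffices to establish two things: (i) $\tfrac{1}{\sqrt n} d \xrightarrow{d} N(0, \Sigma_d)$ for some positive-definite $\Sigma_d$, and (ii) $\hat{\Sigma}_d \xrightarrow{p} \Sigma_d$. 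Given (i) and (ii), Slutsky's theorem and the continuous mapping theorem yield $Q_2 \xrightarrow{d} W^T \Sigma_d^{-1} W$ with $W \sim N(0,\Sigma_d)$, and the latter is exactly $\chi^2(k-1)$.

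For (i) I would generalize the M-estimation argument used for Proposition~\ref{prop:q1} to the multivariate setting. Following the Li--Shepherd approach, I would stack the estimating equations defining the M-estimators $\hat p(x \mid \mathbf z)$ and $\hat p(y \mid \mathbf z)$ together with the $k-1$ sample mean-products $\tfrac{1}{n} d$, and apply the multivariate delta method to obtain joint asymptotic normality of $\tfrac{1}{\sqrt n}d$ centered at its population value. That population value is zero under the null: each indicator $\mathbb{I}(X=j)$ is a function of $X$, so its residual against $Z$ is valid, $R_Y$ is valid, and Daudin's theorem then gives $\mathbb{E}[R_{\mathbb{I}(X=j)} R_Y] = 0$ for every $j$. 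Equivalently --- and this is the sense in which the generalization is ``straightforward'' --- one can invoke the Cram\'er--Wold device: for any fixed $a \in \mathbb{R}^{k-1}$ the linear combination $a^T \tfrac{1}{\sqrt n} d = \tfrac{1}{\sqrt n}\sum_i \big(\sum_j a_j R_{\mathbb{I}(x_i=j)}\big) R_{y_i}$ is itself a scalar GCM-type quantity of the form handled by the proof of Proposition~\ref{prop:q1}. Indeed, because the dummy residuals are OME residuals, $\sum_j a_j R_{\mathbb{I}(x_i=j)}$ is precisely the conditional-mean residual of the real transform $g(X) = \sum_j a_j \mathbb{I}(X=j)$, and $X \ci Y \mid Z$ implies $g(X) \ci Y \mid Z$. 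The Proposition~\ref{prop:q1} argument then shows each such linear combination is asymptotically normal with mean zero, which pins down joint normality and identifies the limiting covariance as $\Sigma_d = \textbf{cov}(P_i)$.

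The main obstacle is (ii) together with the variance identification inside (i): I must verify that the naive sample covariance $\hat{\Sigma}_d$ of the products $P_i$ --- which ignores the fact that $\hat p$ is itself estimated --- consistently estimates the true sandwich covariance. This is precisely the delicate point already settled in the scalar $Q_1$ case, where the delta-method correction terms arising from estimating $\hat p$ cancel under the null; the work here is to check that the same cancellation holds for every diagonal and off-diagonal entry of the $(k-1)\times(k-1)$ matrix, not merely for a single scalar variance. Finally, positive-definiteness of $\Sigma_d$ (hence invertibility of $\hat\Sigma_d$) is what forces us to drop one dummy variable: the full set of $k$ residuals satisfies $\sum_{j=1}^{k} R_{\mathbb{I}(x_i=j)} = \sum_j \mathbb{I}(x_i=j) - \sum_j \hat p(X=j \mid z_i) = 0$, so the $k$-dimensional analogue of $\Sigma_d$ is singular; discarding one component removes this linear dependence without losing information and yields the $k-1$ degrees of freedom claimed.
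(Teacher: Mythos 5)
Your proposal is correct and is essentially the paper's own argument: the appendix likewise proves Proposition~\ref{prop:q2} by generalizing the M-estimation/delta-method proof of Proposition~\ref{prop:q1} to the vector $d$ of dummy-residual dot products, obtaining a $(k-1)$-variate standard normal limit whose squared norm is $\chi^2(k-1)$. The only packaging difference is that the paper absorbs your step (ii) and the Slutsky argument into the delta method itself, by treating the second moments as estimated parameters $\bm{\theta_2}$ and taking $g(\bm{\theta}) = \bm{\theta_1}\bm{\Sigma}^{-1/2}$, so that $\partial g / \partial \bm{\theta_2} = 0$ under the null is exactly the cancellation you flag as the delicate point.
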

\begin{proof}
	A multidemensional analogue of Proposition~\ref{prop:q1}. See Appendix for details.
\end{proof}
Finally, consider categorical $X$ and $Y$ with 
$k>1$ and $r>1$ categories, respectively. Then we define our vector $d$ as the pairwise dot products between the residuals for the indicator variables of $X$ and $Y$
\begin{eqnarray*}
d &  =  & (R_{\mathbb{I}(\mathbf{x}=1)} \cdot R_{\mathbb{I}(\mathbf{y}=1)}, \, \ldots \ ,
R_{\mathbb{I}(\mathbf{x}=k-1)} R_{\mathbb{I}(\mathbf{y}=1)}, \, \ldots \, ,
\\
 &  & R_{\mathbb{I}(\mathbf{x}=1)} \cdot R_{\mathbb{I}(\mathbf{y}=r-1)}, \, \ldots \ ,
R_{\mathbb{I}(\mathbf{x}=k-1)} R_{\mathbb{I}(\mathbf{y}=r-1)}
)
\end{eqnarray*}
and define our test statistic in the same way as for the previous case:
$$
Q_3(\mathbf{x},\mathbf{y}) = \frac{1}{n} \left( d \times \hat{\Sigma}_d^{-1} \times d^T \right)
$$
Analogously to the previous case, we then obtain 
\begin{proposition}
\label{prop:categorical}
If $X \ci Y \mid Z$, then asymptotically $Q_3(\mathbf{x},\mathbf{y})\sim \chi^2((k-1)(r-1))$.
\end{proposition}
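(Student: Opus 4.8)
The plan is to prove Proposition~\ref{prop:categorical} as the fully two-dimensional analogue of Proposition~\ref{prop:q2}, combining the dummy-indicator decomposition of $X$ used for $Q_2$ with the same decomposition applied to $Y$. Writing $g_{ij} = R_{\mathbb{I}(\mathbf{x}=i)} \odot R_{\mathbb{I}(\mathbf{y}=j)}$ for the elementwise product of indicator residuals and $d$ for the $m$-vector ($m = (k-1)(r-1)$) of their dot products over $1 \le i \le k-1$, $1 \le j \le r-1$, the goal is to show that $\tfrac{1}{\sqrt{n}} d$ is asymptotically multivariate normal with mean $0$ and a nonsingular covariance $\Sigma_d$ that is consistently estimated by $\hat{\Sigma}_d$. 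Since $\tfrac{1}{n} d\, \hat{\Sigma}_d^{-1} d^T = (\tfrac{1}{\sqrt{n}} d)\, \hat{\Sigma}_d^{-1} (\tfrac{1}{\sqrt{n}} d)^T$, the quadratic form $Q_3$ then converges to $\chi^2(m)$ by Slutsky's theorem together with the standard fact that $V^T \Sigma^{-1} V \sim \chi^2(m)$ whenever $V \sim N(0,\Sigma)$ with $\Sigma$ nonsingular of dimension $m$.

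First I would establish that each coordinate has mean zero under the null. Every indicator $\mathbb{I}(X=i)$ is a binary function of $X$, so its LS-residual given $Z$ reduces to the OME form $\mathbb{I}(X=i) - \hat{p}(X=i \mid Z)$, which is conditionally mean-zero at the true parameter; the same holds for $\mathbb{I}(Y=j)$. Because $X \ci Y \mid Z$ implies $\mathbb{I}(X=i) \ci \mathbb{I}(Y=j) \mid Z$ for every pair $(i,j)$, the theorem of \citet{Daudin1980} gives $\mathbb{E}[ R_{\mathbb{I}(X=i)} R_{\mathbb{I}(Y=j)} ] = 0$, so each component of $\tfrac{1}{n} d$ converges to $0$ in probability.

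Next I would obtain the joint asymptotic distribution via the M-estimation/delta-method argument used for Proposition~\ref{prop:q1}, applied coordinatewise. Stacking the estimating equations of the M-estimators for $\hat{p}(x \mid z)$ and $\hat{p}(y \mid z)$ with the moment functions $g_{ij}$ and invoking the multivariate central limit theorem yields asymptotic normality of $\tfrac{1}{\sqrt{n}} d$. The crucial step carried over from the scalar case is that the delta-method correction arising from estimating the conditional-probability parameters vanishes at the true parameter: differentiating $\mathbb{E}[R_{\mathbb{I}(X=i)} R_{\mathbb{I}(Y=j)}]$ with respect to a nuisance parameter yields a factor $\partial \hat{p}/\partial \theta$ (a function of $Z$) multiplied by a residual whose conditional mean is zero, so the correction is zero under the null. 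Consequently the limiting covariance reduces to the covariance of the products themselves, which is exactly what the empirical $\hat{\Sigma}_d$ estimates, giving $\hat{\Sigma}_d \to \Sigma_d$ in probability.

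The main obstacle is the nonsingularity of $\Sigma_d$, which is precisely why we retain only $(k-1)(r-1)$ of the $kr$ possible products. Since $\sum_{i=1}^{k} R_{\mathbb{I}(\mathbf{x}=i)} = 0$ and $\sum_{j=1}^{r} R_{\mathbb{I}(\mathbf{y}=j)} = 0$ identically (observation-wise, the indicators and the fitted probabilities each sum to one), the full family of $kr$ products satisfies $k+r-1$ linear constraints, leaving exactly $(k-1)(r-1)$ linearly independent products; dropping level $k$ of $X$ and level $r$ of $Y$ selects a maximal independent set, with every dropped product expressible as a signed sum of kept ones. I would then verify that, provided every joint cell $(i,j)$ retains positive probability in the limit, the covariance of this reduced set is strictly positive definite, so that $\hat{\Sigma}_d^{-1}$ is well defined asymptotically. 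Establishing this full-rank property cleanly---rather than asserting it by analogy with $Q_2$---together with checking that all $m$ coordinatewise correction terms cancel simultaneously, is the part that requires the most care, since it is what guarantees the limit is a clean $\chi^2(m)$ rather than a weighted sum of chi-squares.
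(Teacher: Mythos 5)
Your proposal is correct and follows essentially the same route as the paper's appendix proof: there, too, the M-estimating equations for the conditional-probability models are stacked with moment equations for the $(k-1)(r-1)$ indicator-residual dot products, the delta method yields a multivariate normal limit, and the quadratic form is then read off as $\chi^2((k-1)(r-1))$. The differences are only in packaging---the paper whitens the moment vector inside the delta method via $g(\bm{\theta}) = \bm{\theta_1}\bm{\wedge}$ instead of invoking Slutsky with $\hat{\Sigma}_d^{-1}$, and it leaves implicit the two points you rightly single out as needing care (the vanishing of the nuisance-parameter correction, which holds because $\partial \hat{p}/\partial\bm{\theta}$ is a function of $Z$ multiplying a conditionally mean-zero residual, and the nonsingularity of $\Sigma_d$ after dropping one level of each variable).
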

\begin{proof}
	Very similar to Proposition~\ref{prop:q2}. See Appendix for details.
\end{proof}

\subsection{Conditional probability model}

The above simple combination of LS residuals with a Hotelling's test provides
us with a generic framework for CI testing for categorical and ordinal data. To
conduct such tests, we need to choose an estimator of the 
involved conditional probabilities.  Ideally, this should be a statistical
model that is able to naturally incorporate both ordinal and categorical
variables, and provides a simple way to compute the LS residuals. In this
paper, we consider two estimators: 1) Generalized Linear Model (GLM), and 2)
Random Forest with probability prediction \citep{Malley2012}. We chose GLM because it 
is an M-estimator and therefore covered by our proofs in the previous section.
The random forest is not an M-estimator but we hypothesized that it might nevertheless
work well in practice and could be good at discarding irrelevant information from high-dimensional
conditioning sets, which we hoped would benefit the power and robustness of the
resulting CI test. 

\subsection{Relationship to the Partial Copula approach}

\citet{petersenhansen2021} use partial copulas to construct a CI test for continuous data. We note that this approach is closely related to ours. 
For continuous $Y$, the \emph{partial copula} of $Y$ given $Z$ is defined as 

$$
 C_{y_i \mid z_i} = \hat{p}(Y \leq y_i \mid Z=z_i )
$$

Therefore, 
$$
C_{y_i \mid z_i} = \frac{1}{2} ( ( \hat{p}(Y \leq y_i \mid Z=z_i ) - \hat{p}(Y > y_i \mid Z=z_i ) )  + 1 )
$$

where the difference $\hat{p}(Y \leq y_i \mid Z=z_i ) - \hat{p}(Y > y_i \mid
Z=z_i ) $ is similar to the LS residual. Specifically, in the LS residual, the
left term is $\hat{p}(Y < y_i \mid Z=z_i )$ rather than $\hat{p}(Y \leq y_i
\mid Z=z_i )$. In a certain sense, the partial copula could be seen as a
``limit'' of LS residuals: Consider a continuous variable $Y$ defined on some
interval $[a,b]$, and an ordinal version $\hat{Y}^{(n)}$ generated from $Y$ by
binning using $n$ equidistant cutoffs. Then as $n \to \infty$, the LS residual
$R_{\hat{y}^{(n)}_i \mid z_i}$ converges to $2 C_{y_i^{(n)} \mid z_i} - 1$. 

\section{Empirical analysis}
We now show empirical results comparing our method to some of the other
state-of-the-art CI tests. We compare our Generalized Linear Models based test
(GLM) and Random Forest based test (RFT) to $ 3 $ other tests: 1) Mutual
Information based test (MI) \citep{edwards2012introduction}, 2) Monte Carlo
Permutation test (MC-MI) \citep{edwards2012introduction}, and 3) SCCI
\citep{MarxV19}. For ordinal data, we also compare it to the
Jonckheere-Terpstra test (JT) \citep{jttest}. We use the implementation of MI,
MC-MI, and JT from the R package `bnlearn' (ver. 4.7) \citep{Scutari2014}, and
SCCI from the R package `SCCI' (ver. 1.2) \citep{MarxV19}. For GLM, we use a
multinomial logistic regression (binomial logistic regression for binary data)
from the R package `nnet' (ver. 7.3.17) \citep{mass2002} to compute the
prediction probabilities required for computing residuals. In the case of
ordinal data, we use a proportional odds logistic regression model from the R
package `VGAM' (ver. 1.1.7) \citep{vgam2022} that takes the order of the
categories into account. For RFT, we use the implementation of probability
forests \citep{Malley2012} from the R package `ranger' (ver. 0.13.1)
\citep{rangerwright2017} to compute prediction probabilities. We use the
default hyperparameter values except reducing the number of trees to $ 50 $ to
reduce computational cost with no loss in performance. All analyses
were run on an Intel i5-10600k CPU with 32 GBs of RAM.

\subsection{Calibration}
\label{sec:calibration}

\begin{figure}
	\centering
	\includegraphics{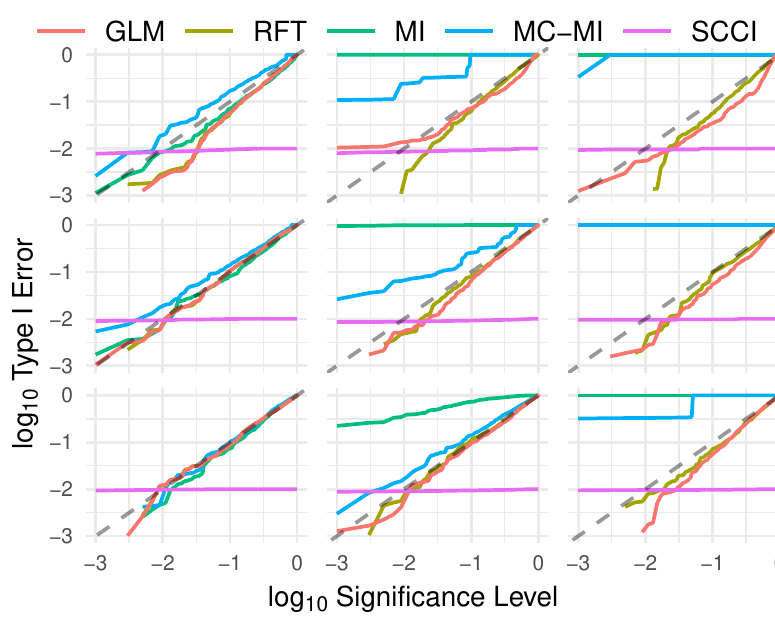}
	\caption{Type I error vs significance level for sample sizes (top to
	bottom): $ [20, 40, 80] $ and number of conditional variables (left to
	right): $ [1, 3, 5] $ on conditionally independent simulated binary
	datasets.}
	\label{fig:calibration}
\end{figure}

To determine calibration, we analyzed the Type I error rate of the tests
at varying significance levels. Under the null $ X \ci Y \mid Z $, 
for a perfectly calibrated test, we expect the p-value to be uniformly
distributed, hence a plot of significance level versus fraction of rejected
null hypotheses should be a straight diagonal line.
For this analysis, we generated $ 500 $ datasets with all binary variables
satisfying the null $ X \ci Y \mid Z $ according to the following structure:
\begin{center}
\begin{tikzpicture}[yscale=.75]
\node (x) at (0,1) {$X$};
\node (y) at (2,1) {$Y$};
\node (z) at (1,1) {$Z_1$};
\node at (3,1) {$Z_2$};
\node at (4,1) {$\ldots$};
\node at (5,1) {$Z_k$};
\draw [->] (z) -- (y);
\draw [->] (z) -- (x);
\end{tikzpicture}
\end{center}
We started by generating uniformly
random binary samples $\mathbf{z_i} $ for the conditional variables.  Then we
sampled $\mathbf{x}$ and $\mathbf{y}$ from the binomial distribution $ B(2,
\mathbf{z_1}/3) $. Finally, we computed 500 p-values by 
testing $ X \ci Y \mid Z $ on each generated dataset using all the
tests while varying the number of conditional variables and sample sizes.

Figure~\ref{fig:calibration} shows the results of our analysis on a 
log-log scale that emphasizes the values in the common range for $p$-value
cutoffs. GLM and RFT are better calibrated in most cases except when the number of
conditional variables is low with a relatively high sample size (bottom left plot
in Figure~\ref{fig:calibration}), where MC-MI is better calibrated.  Especially
for high-dimensional CI tests in small samples, GLM and RFT 
are much better calibrated compared to the other tests. SCCI is not calibrated
at all; this is because it only gives pseudo p-values which are all around $ 0.01 $,
with values greater than $ 0.01 $ representing independence.

\subsection{Discrimination}

\begin{figure}
	\centering
	\includegraphics{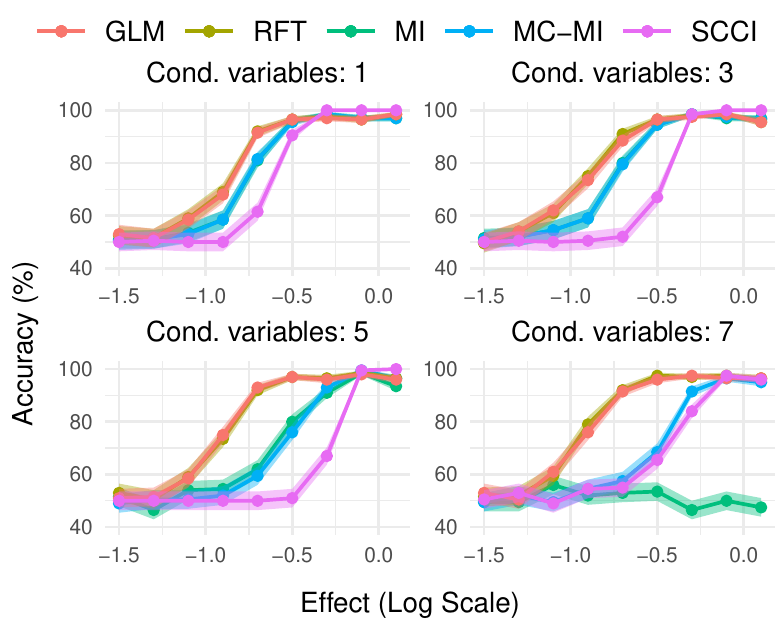}
	\caption{Accuracy (shading: mean $\pm$ standard error, $N=200$) of classifying
	simulated binary datasets (sample size: $1000$) as conditionally
	dependent or independent.}
	\label{fig:cat_discrimination}
\end{figure}

Having addressed calibration under null, we now show a
discrimination analysis to compare the accuracy of tests on correctly accepting or
rejecting the null. We conducted this analysis on both categorical and ordinal data.
For deciding between dependence and independence
we used a $p$-value threshold of $ 0.05 $
for all tests except SCCI for which we use its designated threshold of $0.01$.

\subsubsection{Categorical data.}
Our analysis is similar to the one performed by \citet{Tsamardinos2010}. 
We generated data according to the following general DAG structure:
\begin{center}
\begin{tikzpicture}[yscale=.75]
\node (x) at (0,0) {$X$};
\node (y) at (2,0) {$Y$};
\node (z) at (1,1) {$Z_1$};
\node at (2,1) {$Z_2$};
\node at (3,1) {$\ldots$};
\node at (4,1) {$Z_k$};
\draw [->] (x) edge node [midway, above] {?} (y);
\draw [->] (z) -- (y);
\draw [->] (z) -- (x);
\end{tikzpicture}
\end{center}
The $Z_{\geq 2}$ act as irrelevant ``nuisance variables''. Our task is 
to determine whether the edge $X \to Y$ is present (dependent) or absent
(independent). We generated binary data using the logistic model
\begin{equation}
p( y_i = 1 ) = \lambda ( \sum_{X \text{ is a parent of } Y} \beta x_i ) 
\label{eqn:logisticmodel}
\end{equation}
where $\lambda(x) = e^x/(e^x+1)$ is the logistic function and $\beta$ is 
the ``effect'' (which we fixed to the same value for all edges).
Varying the effect and the number of $k$ of conditional variables,
we simulated 100 dependent and 100 independent datasets consisting of 1000 samples
for each combination.

Figure~\ref{fig:cat_discrimination} shows the accuracy of classifying the
simulated datasets. All tests perform poorly for tiny effects and strongly
for huge effects, but we find that GLM and RFT outperforms the other
tests in the ``switch regime'' in between, with the difference becoming more
pronounced when adding nuisance variables. Thus, our test appears to be more 
robust to noise. 

\subsubsection{Ordinal data.}

\begin{figure}
	\centering
	\includegraphics{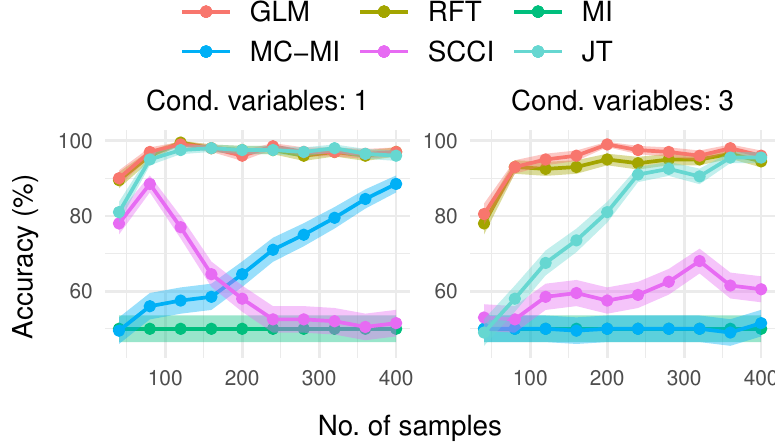}
	\caption{Accuracy (shading: mean $\pm$ standard error, N=200) of
		classifying simulated ordinal data (8 levels per variable) as
		conditionally dependent or independent.}
	\label{fig:accuracy_ord}
\end{figure}

We next simulated ordinal data from the same DAG structure as follows: we first
generated samples $\mathbf{z_i}, 1 \leq i \leq k$ from the binomial
distribution $B(8, 0.5) $. To generate independent data, we independently
sampled $\mathbf{x}$ and $\mathbf{y}$ from $ B(8,\frac{\mathbf{z_1}}{9})$.
To generate dependent data, we then randomly permuted $\mathbf{z_1}$.
Figure~\ref{fig:accuracy_ord} shows the accuracy of the tests computed on
$ 100 $ conditionally dependent and independent datasets. In this setup, we
varied the sample size rather than an effect size. For $k=1$, GLM, RFT, and JT
performed equally well and better than the other tests, which do not take the
order of the categories into account.  But for $k=3$, GLM and RFT were more
accurate than JT in small samples.

\section{Applications}
We evaluated our test on two important applications of CI tests: (1) model testing and (2)
structure learning. We used the same baselines as in the previous section
for comparison.

\subsection{Model testing}
\label{section:model_testing}

\begin{figure}
	\centering
	\includegraphics{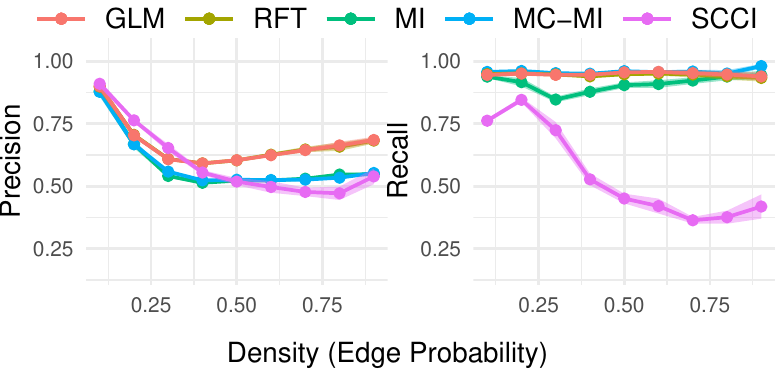}
	\caption{Precision and recall of testing implied versus non-implied CIs
		 in binary data (N=1000) simulated from random DAGs on $ 20 $ variables.
		 Shading: mean $\pm$ standard error.} 
	\label{fig:model_testing}
\end{figure}

The CIs implied by a DAG should hold in the dataset(s) it is supposed to represent.
Therefore, we can scrutinize a DAG by testing implied CIs.
In our analysis, we simulated datasets from randomly generated DAGs and
compared how well the tests can correctly detect the implied CIs of the DAG in
the simulated dataset. We started by generating random DAGs on $20$
variables. We connected each pair of variables at a fixed probability,
with all edges oriented according to a pre-defined topological ordering. 
We then simulated binary datasets with $ 1000 $ samples using our logistic
model (Equation~\ref{eqn:logisticmodel}) setting $\beta=0.15$.
Then we used the CI tests to test one implied CI per missing edge 
and an equal number of randomly generated CIs in the dataset.
For generating a random CI $ X \ci Y
\mid Z $, we first selected $ X $ and $ Y $ variables randomly
and then selected a random number of conditional variables $ Z $ from
the remaining variables. Using d-separation, we determined which 
randomly generated CIs truly hold in the DAG. 
All CIs were then
tested in the simulated data and precision and recall were computed
(Figure~\ref{fig:model_testing}). The precision of all methods was comparable in 
sparse DAGs, as the implied CIs had relatively few
conditional variables. But in denser DAGs, GLM and RFT had better precision.
Recall was comparable for all
tests except SCCI, which did not perform well.

\subsection{Structure learning}

CI tests also play an important role in constraint-based structure learning,
where algorithms iteratively perform CI tests to determine whether two
variables in the model are connected by an edge or not. For learning the
network structures in this section, we use the implementation of fast
``stable'' variant \citep{Colombo2014} of PC algorithm
\citep{spirtes2000causation} from the R package 'pcalg` \citep{Kalisch2012}.

\subsubsection{Simulated data.}
\begin{figure}
	\centering
	\includegraphics{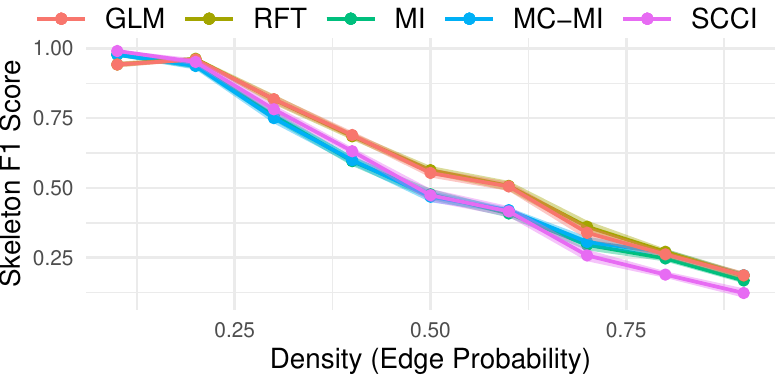}
	\caption{Structure learning on simulated data: Mean F1 scores (10
		 simulated binary datasets per point) for varying graph densities. Each
		 dataset contains 1000 samples and is simulated from a randomly
		 generated DAG with 20 variables. Shading: mean $\pm$ standard error.}
	\label{fig:sl_density}
\end{figure}

We first show empirical results of structure learning on simulated
datasets.  We randomly generated DAGs with varying densities as
in the previous section. For each DAG, we generated $1000$ 
samples using our logistic model (Equation~\ref{eqn:logisticmodel}) using $\beta=0.15$. 
We used the PC algorithm to learn the
model skeleton for each simulated dataset and compared it to the
true skeleton using the F1~score (Figure~\ref{fig:sl_density}).
All tests performed comparably for sparse DAGs. For denser
DAGs, the stratification tests MI and MC-MI performed the worst, whereas
the variable importance test SCCI performed better. Yet GLM and RFT
substantially outperformed SCCI, as expected
given the results in Figure~\ref{fig:cat_discrimination}. 

\subsubsection{Synthetic benchmark data.}
\begin{figure}[h]
	\centering
	\includegraphics{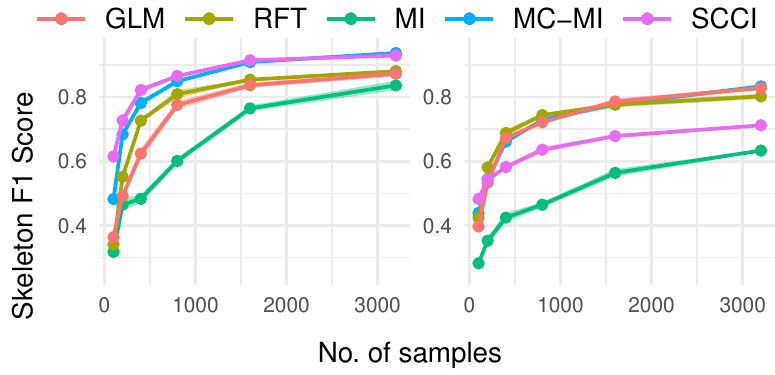}
	\caption{Structure learning on datasets ``alarm'' (left) and ``insurance'' (right):
		Mean F1~scores (10 subsampled datasets per sample size) of the learned
		model skeletons.  Presence of an edge is considered the ``positive'' case
		for F1~scores. Shading: mean $\pm$ standard error.}
	\label{fig:sl}
\end{figure}
We next evaluated the performance of the tests on two commonly used datasets in
structure learning benchmarks, the ``alarm'' \citep{beinlich1989alarm} and
``insurance'' \citep{binder1997adaptive} datasets, which again are simulated
data from known ground truth.  We used the PC algorithm to learn the skeleton
using subsampled datasets of varying sizes, and determined F1 scores
(Figure~\ref{fig:sl}).  SCCI and MC-MI perform best for the alarm model whereas
GLM, RFT, and MC-MI perform equally well for the insurance model (except for
very low sample size). Importantly, both these models are very sparse. The
alarm model has 37 variables and 46 edges, hence the edge probability is $
\frac{46}{(37*36)/2} = 0.069 $. The insurance model is slightly denser with 27
variables and 52 edges and an edge probability of $ \frac{52}{(27*26)/2} = 0.15
$.  As PC algorithm removes most of the edges in early iterations (i.e. low
conditioning variables) for sparse models, when tests perform poorly in these
initial iterations, it can lead to a cascading effect where the algorithm ends
up doing many more tests and may reach a higher number of conditioning
variables. We saw this happening with GLM and RFT in these datasets as it is
slightly less well calibrated than MC-MI for low number of conditional
variables and high sample size (Figure~\ref{fig:calibration}). Moreover,
MC-MI's bias towards classifying a CI as independent
(Figure~\ref{fig:calibration}) helps in learning sparser models.

\subsubsection{Real data.}

\begin{figure}[h]
	\centering
	\begin{subfigure}{0.6\columnwidth}
		\centering
		\begin{tikzpicture}[scale=1.77]
			\tikzstyle{every node}=[inner sep=1pt, align=center]
			\scriptsize
			\node (hrpw) at (0:1.2cm) {HrPW};
			\node (race) at (-33:1.2cm) {Race};
			\node (ntvc) at (-61:1.2cm) {NtvC};
			\node (edct) at (-98:1.2cm) {Edct};
			\node (age) at (-131:1.2cm) {Age};
			\node (mrts) at (-164:1.2cm) {MrtS};
			\node (rltn) at (-196:1.2cm) {Rltn};
			\node (sex) at (-225:1.2cm) {Sex};
			\node (occp) at (-262:1.2cm) {Occp};
			\node (incm) at (-300:1.2cm) {Incm};
			\node (wrkc) at (-333:1.2cm) {Wrkc};

			\draw [] (hrpw) -- (race) -- (ntvc) -- (edct) -- (age) -- 
				(mrts) -- (rltn) -- (sex) -- (occp) -- (incm) -- (wrkc) -- (hrpw);

			\draw [] (hrpw) -- (incm) -- (age) -- (hrpw);

			\draw [] (wrkc) -- (edct) -- (occp) -- (wrkc);

			\draw [] (edct) -- (rltn) -- (age);

			\draw [] (hrpw) -- (edct) -- (incm);
		\end{tikzpicture}
		\caption{}
		\label{fig:sl_adult_model}
	\end{subfigure}%
	\begin{subfigure}{0.4\columnwidth}
		\includegraphics[scale=0.85]{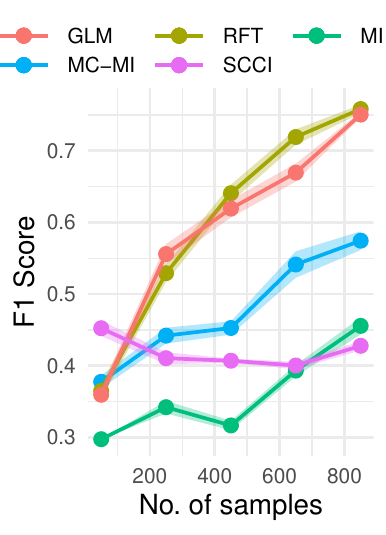}
		\caption{}
		\label{fig:sl_adult}
	\end{subfigure}
	\caption{Structure learning on adult income data. (a) Skeleton
		estimated by the stable PC algorithm from the data in
		Figure~\ref{fig:pcerror} when using our Random-Forest based
		test (RFT). (b) Mean F1 score ($10$ adult income data subsamples
		per point)
		when comparing $d$-connected variable pairs in the CPDAG to
		correlated variable pairs in the dataset. Presence of
		d-connection is used as the positive case for the
		F1 score. Shading: mean $\pm$ standard
		error.}
\end{figure}

Finally, we return to the adult income data.
Using PC structure learning with our RFT, a 
more connected skeleton was generated (Figure~\ref{fig:sl_adult_model}) compared
to the earlier baseline (Figure~\ref{fig:pcerror}).
For systematic quantitative 
evaluation, we discretized the variable ``Age'' into the categories $ <21 $,
$ 21 \text{-} 30 $, $\ldots$, $ 61 \text{-} 70 $, $ >70 $ and the variable 
``HoursPerWeek'' into the categories
$ <=20 $, $ 21 \text{-} 30 $, $ 30 \text{-} 40 $, $ >40 $. 
We then tested whether pairwise dependence in the data
corresponded to d-connectedness in learned structures --
a reasonable requirement that is evaluable even in the absence of a ground 
truth structure.
To determine pairwise dependences, we
performed chi-square independence tests for each variable pair and considered 
the variables dependent if the 
root mean square error of approximation, a chi-square effect size defined by 
$\sqrt{(\chi^2-\text{df})/(n \text{df})}$, is greater than $ 0.05 $. 
We then learned model structures on subsamples of the
dataset, and calculated F1 scores by comparing d-connected variables in
the learned CPDAG to dependent variables in the dataset. 
GLM and RFT performed best except for the smallest sample sizes
(Figure~\ref{fig:sl_adult}).

Taken together, our results show that our GLM and RFT perform similarly or 
slightly worse than baselines for low-dimensional, limited data, 
but equally or better in the other cases. Especially in our motivating scenario
of high-dimensional, tightly correlated datasets, the
performance gain is substantial. 
GLM and RFT were slower than most baselines but 
faster than MC-MI (Figure~\ref{fig:runtime}).

\begin{figure}
	\centering
	\includegraphics{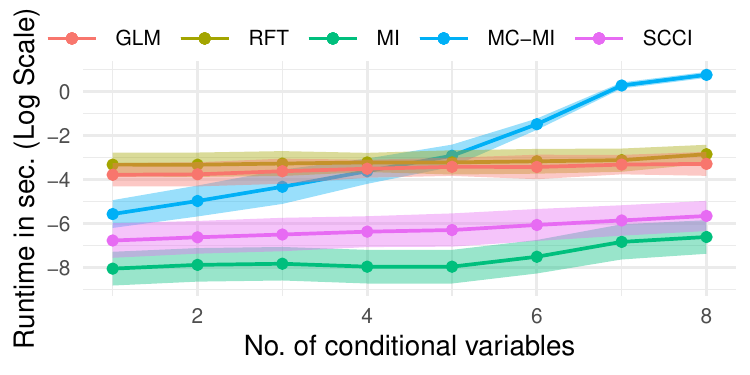}
	\vspace{-2.5mm}
	\caption{Mean runtime (100 CI tests per point) with varying numbers of
		conditional variables and $1000$ samples per dataset; data is generated
		like in Figure~\ref{fig:cat_discrimination}. Shading: mean $\pm$ standard
		error.}
	\label{fig:runtime}
\end{figure}

\section{Conclusion}

We have proposed a residualization-based approach for CI testing for discrete
and ordinal data. We think this approach could be especially attractive for
manual model testing in empirical research because  (1) it is symmetric by
construction; (2) it is based on rather elementary statistical concepts (here
we used Hotelling's test and GLM/random forest); and (3) its computational cost
is reasonable.  In addition to these qualitative advantages, we showed that it
compares favorably to existing alternatives with respect to calibration,
discrimination, and power, and is useful in the context of structure learning
when the networks can be expected to be dense, which is the case for many
real-world datasets.

Although the test is sensitive to model misspecification when computing the
residuals, it provides the flexibility to choose a parametric M-estimator that 
is suitable for the data at hand. Alternatively, we found the non-parametric 
Random Forest estimator to perform well empirically.

We have shown that the LS residuals that our approach is based on are closely
related to partial copulas. We therefore believe that it should be possible to
combine the CI testing approach proposed by \citet{petersenhansen2021} with our
approach to obtain a single, unifying CI testing framework for any kind of
mixed dataset. 

Since our approach outperforms baselines in high- but not low-dimensional
settings, structure learning algorithms might be able to get the ``best of both
worlds'' by adaptively choosing our test or a simpler one based on some
estimation of how well the simpler test should perform. However, appropriate
criteria for switching between the two tests would need to be developed first.

For now, we hope that the combination of our residualization approach and a
random forest might be a reasonably robust ``plug-in'' solution for causal
model testing and structure learning in datasets containing ordinal and
categorical variables. We hope that this might help to persuade more empirical
researchers to test their graphical causal models or to try out structure
learning algorithms.

\section{Acknowledgments}
The authors would like to thank Tom Heskes for discussions. 
This work was done in part while the authors were
visiting the Simons Institute for the Theory of Computing.
\bibliography{bibliography}

\clearpage
\newpage

\appendix
\onecolumn
\section{Proofs of propositions}
For proving the asymptotic distributions of our test statistics under the null,
we use the asymptotic distribution proof from \citet{lishepherd2010} as a
template and define m-estimators \citep{mestimation} for our test statistics.
Given a vector of parameters $ \bm{\theta} $ and a function $
\Psi_i(\bm{\theta}) = \Psi(Y_i, X_i, \bm{Z}_i; \bm{\theta}) $, such that the
following conditions are satisfied:

\begin{enumerate}
	\item $ \bm{\theta} $ can be estimated using $ \sum_{i=1}^n \Psi_i(\bm{\theta}) = 0 $
	\item $ \Psi_i $ doesn't depend on $ i $ or $ n $.
	\item $ \mathbb{E}[\Psi_i(\bm{\theta})] = 0 $
\end{enumerate}

From m-estimation theory, if $ \Psi $ is suitably smooth, then as $ n \rightarrow \infty $,

\begin{equation}
	\sqrt{n}(\hat{\bm{\theta}} - \bm{\theta}) \rightarrow \mathcal{N}(0, V(\bm{\theta}))
\end{equation}

where $ V(\bm{\theta}) = A(\bm{\theta})^{-1} B(\bm{\theta}) [A(\bm{\theta})^{-1}]' $, $ A(\bm{\theta}) = \mathbb{E} \left[ - \frac{\partial}{\partial \bm{\theta}} \Psi_i (\bm{\theta}) \right] $, and $ B(\bm{\theta}) = \mathbb{E}[\Psi_i(\bm{\theta}) \Psi_i{\bm{\theta}}]' $.

Further, from the delta method, given a function $ g(\bm{\theta}) $ which is a smooth function of $ \bm{\theta} $, we have:

\begin{equation}
\label{eq:asymptotic_normal}
	\sqrt{n} \left[ g(\hat{\bm{\theta}}) - g(\bm{\theta}) \right] \rightarrow \mathcal{N}(0, \sigma^2)
\end{equation}

where $ \sigma^2 = \left[ \frac{\partial}{\partial \bm{\theta}} g(\bm{\theta}) \right] V(\bm{\theta}) \left[ \frac{\partial}{\partial \bm{\theta}} g(\bm{\theta}) \right]'$ 

We now define $ \bm{\theta} $, $ \Psi_i $, and $ g(\bm{\theta}) $ for each of
our three test statistics. We separate our parameter vector into three subvectors as
$ \bm{\theta} = \{ \bm{\theta}^{\bm{X}}, \bm{\theta}^{\bm{Y}},
\bm{\theta}^{\bm{T}} \} $, where $ \bm{\theta}^{\bm{X}} $ and $ \bm{\theta}^{\bm{Y}}
$ are the parameters of the models $ P(X | \bm{Z} ) $ and $ P(Y | \bm{Z})
$, respectively. Since these parameters and the m-estimator over these parameters 
are the same for all the test statistics, we first define a partial m-estimator, $ \Phi_i $ on
these parameters.

\begin{equation}
\label{eq:likelihood_est}
\Phi_i(\bm{\theta}) = \begin{cases}
	\frac{\partial}{\partial \bm{\theta}^{\bm{Y}}} l_Y (Y_i, \bm{Z}_i; \bm{\theta^{Y}}) \\
	\frac{\partial}{\partial \bm{\theta}^{\bm{X}}} l_X (X_i, \bm{Z}_i; \bm{\theta^{X}}) \\
	\end{cases}
\end{equation}

where $ l_X $ and $ l_Y $ are the log-likelihood functions of the probability
models $ P(Y | \bm{Z}) $ and $ P(X | \bm{Z}) $ (for example, these could be multinomial models) 
with parameters $ \bm{\theta}^{Y} $ and $ \bm{\theta}^{X} $ respectively.  In the
following sub-sections, we use $ \Phi_i(\bm{\theta}) $ as a partial m-estimator
and define the remaining parameters ($ \bm{\theta}^{\bm{T}} $), the m-estimator
on the remaining parameters ($ \Psi_i(\bm{\theta}^{\bm{T}}) $), and a function
($ g(\bm{\theta}) $) for each of the test statistics.

\subsection{Proof for proposition 1}
Using the partial m-estimator defined in \ref{eq:likelihood_est}, we now define the remaining parameters $ \bm{\theta}^T $, m-estimator function on $ \bm{\theta}^T $, and $ g(\bm{\theta}) $ for $Q_1$.

\begin{equation}
	\begin{split}
		\bm{\theta}^T &= (\theta_1, \theta_2) \\
		\Psi (X_i, Y_i, \mathbf{Z}_i; \bm{\theta}^T) &= 
			\begin{cases}
				R_{y_i} R_{x_i} - \theta_1 \\
				(R_{y_i} R_{x_i})^2 - \theta_2
			\end{cases} \\
		g(\bm{\theta}) &= \frac{\theta_1}{\sqrt{\theta_2 - \theta_1^2}} \\
	\end{split}
\end{equation}

Here, we have chosen $ g(\bm{\theta}) $ such that $ Q_1 = (\sqrt{n}
g(\bm{\theta}))^2 $. Further, solving the equation $ \sum_i \Psi(X_i, Y_i,
\mathbf{Z}_i; \bm{\theta}^T) = 0 $ gives us the estimates for $ \theta_1 $ and
$ \theta_2 $,

\begin{equation}
	\begin{split}
		\hat{\theta}_1 &= \mathbb{E}[R_{\mathbf{x}} R_{\mathbf{y}}] \\
		\hat{\theta}_2 &= \mathbb{E}[(R_{\mathbf{x}} R_{\mathbf{y}})^2] \\
	\end{split}
\end{equation}

As both $ \Psi $ and $ g(\bm{\theta}) $ are smooth functions, from m-estimation theory and the delta method as $ n \rightarrow \infty $,

$$ \sqrt{n} (g(\hat{\bm{\theta}}) - g(\bm{\theta})) \rightarrow \mathcal{N}(0, \sigma^2) $$

Since under the null $ g(\bm{\theta}) = 0 $,

\begin{equation}
		\sqrt{n}g(\hat{\bm{\theta}}) \rightarrow \mathcal{N}(0, \sigma^2) \\
\end{equation}

where $ \sigma^2 $ can be computed as defined in Eq~\ref{eq:asymptotic_normal}.

\begin{equation}
	\begin{split}
		A(\bm{\theta}) &= E \left[ -\frac{\partial}{\partial \theta_1} \Psi_i(\bm{\theta}), -\frac{\partial}{\partial \theta_2} \Psi_i(\bm{\theta}) \right] = E \begin{bmatrix} 1 & 0 \\ 0 & 1 
				\end{bmatrix} = \mathbb{I}_2 \\
		B(\bm{\theta}) &= \mathbb{E} \begin{bmatrix}
			(R_{y_i} R_{x_i} - \theta_1)(R_{y_i} R_{x_i} - \theta_1) & (R_{y_i} R_{x_i} - \theta_1) ((R_{y_i} R_{x_i})^2 - \theta_2) \\
			((R_{y_i} R_{x_i})^2 - \theta_2) (R_{y_i} R_{x_i} - \theta_1) &  ((R_{y_i} R_{x_i})^2 - \theta_2) ((R_{y_i} R_{x_i})^2 - \theta_2)  \\
				\end{bmatrix} \\
			       &= \begin{bmatrix}
				       \theta_2 & \mathbb{E}[(R_{\mathbf{x}} R_{\mathbf{y}})^3] \\
				       \mathbb{E}[(R_{\mathbf{x}} R_{\mathbf{y}})^3 & \mathbb{E}[(R_{\mathbf{x}} R_{\mathbf{y}})^4] - \theta_2^2 \\
			       \end{bmatrix} \text{(as $\theta_1 = 0 $ under the null)} \\
		V(\bm{\theta}) &= A(\bm{\theta})^{-1} B(\bm{\theta}) [A(\bm{\theta})^{-1}]' = B(\bm{\theta}) \\
		\frac{\partial}{\partial \theta} g(\bm{\theta}) &=  
					\begin{bmatrix} 
						\frac{\partial}{\partial \theta_1} g(\bm{\theta}) &
						\frac{\partial}{\partial \theta_2} g(\bm{\theta})
					\end{bmatrix} 
			= \begin{bmatrix}
				\theta_2^{-\frac{1}{2}} &
				0 \\
			   \end{bmatrix} \\
		\sigma^2 &= \begin{bmatrix} \theta_2^{-\frac{1}{2}} & 0 \end{bmatrix} 
			\begin{bmatrix} \theta_2 & \mathbb{E}[(R_{\mathbf{x}} R_{\mathbf{y}})^3] \\ 
				\mathbb{E}[(R_{\mathbf{x}} R_{\mathbf{y}})^3] & \mathbb{E}[(R_{\mathbf{x}} R_{\mathbf{y}})^4] - \theta_2^2 \\ 
				\end{bmatrix} 
				\begin{bmatrix} \theta_2^{-\frac{1}{2}} \\ 0 \\ \end{bmatrix} \\
		        &= ((\theta_2)^{-\frac{1}{2}})^2 \theta_2 + 0 + 0 = 1 \\
	\end{split}
\end{equation}

Hence, $ \sqrt{n}g(\bm{\theta}) $ has an asymptotic standard normal
distribution. Since, $ Q_1 = (\sqrt{n}g(\bm{\theta}))^2 $, $ Q_1 $ is
asymptotically chi-square distributed with $ 1 $ degree of freedom.

\subsection{Proof for proposition 2}
Similar to the last proof, we start by defining the remaining parameters, the m-estimator function, and $ g(\bm{\theta}) $ for $ Q_2 $.

\begin{equation}
	\begin{split}
		& \bm{\theta}^T = (\bm{\theta_1}, \bm{\theta_2}) \\
		& \bm{\theta_1} = (\theta_1^1, \theta_1^2, \cdots, \theta_1^{k-1}) \\
		& \bm{\theta_2} = (\theta_2^{11}, \cdots, \theta_2^{1(k-1)}, \theta_2^{21}, \cdots, \theta_2^{2(k-1)}, \cdots, \theta_2^{(k-1) (k-1)}) \\
		& \Psi(X_i, Y_i, \mathbf{Z}_i; \bm{\theta}^T) = 
			\begin{cases}
				R_{{\mathbb{I}(x=j)}_i}.R_{y_i} - \theta_1^j \, \, \forall j \in \{ 1, \cdots, k-1 \} \\
				(R_{{\mathbb{I}(x=j)}_i}.R_{y_i} \times R_{{\mathbb{I}(x=k)}_i}.R_{y_i}) - \theta_{2}^{jk} \,\, \forall j, k \in \{ 1, \cdots, k-1 \} \\
			\end{cases} \\
		% & g(\bm{\theta}) = \left(\sum_{j=1}^{k-1} \theta_1^j \wedge^{j1}, \sum_{j=1}^{k-1} \theta_1^j \wedge^{j2}, \cdots, \sum_{j=1}^{k-1} \theta_1^j \wedge^{j(k-1)} \right) \\
		% & g(\bm{\theta}) = \left( \theta_1 \cdot \wedge^{*1}, \theta_1 \cdot \wedge^{*2}, \cdots, \theta_1 \cdot \wedge^{*(k-1)} \right) \\
		& g(\bm{\theta}) = \bm{\theta_1} \bm{\wedge}
	\end{split}
\end{equation}
where $ \bm{\wedge} $ is defined as:
\begin{equation}
	\begin{split}
		\bm{\wedge} &= \bm{\Sigma}^{-\frac{1}{2}} \\
		\Sigma^{i,j} &= \mathbb{E}[(R_{\mathbb{I}(\mathbf{x}=i)}.R_\mathbf{y})(R_{\mathbb{I}(\mathbf{x}=j)}.R_\mathbf{y})] - \mathbb{E}[R_{\mathbb{I}(\mathbf{x}=i)}.R_\mathbf{y}] \mathbb{E}[R_{\mathbb{I}(\mathbf{x}=j)}.R_\mathbf{y}] \\
			    &= \theta_2^{ij} - \theta_1^{i} \theta_1^{j} \\
	\end{split}
\end{equation}

Here, we have chosen $ g(\bm{\theta}) $ such that $ Q_2 = \sum (\sqrt{n} g(\bm{\theta}))^2 $. Further, solving the equation $ \sum_i \Psi(X_i, Y_i, \mathbf{Z}_i; \bm{\theta}^T) = 0 $ gives us estimates for $ \bm{\theta_1} $ and $ \bm{\theta_2} $:
\begin{equation}
	\begin{split}
		\hat{\theta}_{1}^j &= \mathbb{E}[R_{\mathbb{I}(\mathbf{x}=j)} R_\mathbf{y}] \\
		\hat{\theta}_2^{jk} &= \mathbb{E}[(R_{\mathbb{I}(\mathbf{x}=j)} R_\mathbf{y}) (R_{\mathbb{I}(\mathbf{x}=k)} R_\mathbf{y})] \\
	\end{split}
\end{equation}

Similar to the last case as both $ \Psi $ and $ g(\bm{\theta}) $ are smooth functions with $ g(\bm{\theta}) = 0 $ under the null,

$$ \sqrt{n} g(\hat{\bm{\theta}}) \rightarrow \mathcal{N}(\bm{0}, \bm{\sigma}^2) $$

We again use Eq~\ref{eq:asymptotic_normal} to compute $ \bm{\sigma} $ with $ A(\bm{\theta}) $ defined as,

\begin{equation}
	A(\bm{\theta}) = \mathbb{E} \left[ - \frac{\partial}{\partial \theta_1^{j}} \Psi_i(\bm{\theta}), - \frac{\partial}{\partial \theta_2^{jk}} \Psi_i(\bm{\theta}) \right] = \mathbb{I}_{k(k-1)}
\end{equation}

The matrix $ B(\bm{\theta}) $ has $ 4 $ types of terms. To simplify the notation, we split $ B(\bm{\theta}) $ into sub-matrics $ \bm{B_{00}} $, $ \bm{B_{01}} $, $ \bm{B_{10}} $, and $ \bm{B_{11}} $ with shapes 
$ (k-1) \times (k-1) $, $ (k-1) \times (k-1)^2 $, $ (k-1)^2 \times (k-1) $, and $ (k-1)^2 \times (k-1)^2 $ respectively:

\begin{equation}
	\begin{split}
	B(\bm{\theta}) &= \begin{bmatrix}
		\bm{B_{00}} & \bm{B_{01}} \\
		\bm{B_{10}} & \bm{B_{11}} \\
			\end{bmatrix} \\
	B_{00}^{i, j} &= \mathbb{E}[(R_{\mathbb{I}(x=i)} R_y - \theta_1^i)(R_{\mathbb{I}(x=j)} R_y - \theta_1^j)] \\
		       &= \theta_2^{ij} - 2 \theta_1^i \theta_1^j + \theta_1^i \theta_1^j = \theta_2^{ij} = \bm{\Sigma} \, \, \text{(as $ \bm{\theta_1} = 0 $ under the null)} \\
		B_{10}^{i, jk} &= \mathbb{E}[(R_{\mathbb{I}(x=i)} R_y - \theta_1^i)(R_{\mathbb{I}(x=j)} R_y \times R_{\mathbb{I}(x=k)} R_y - \theta_2^{jk})] \\
		       &= \mathbb{E}[R_{\mathbb{I}(x=i)} R_y R_{\mathbb{I}(x=j)} R_y R_{\mathbb{I}(x=k)} R_y] - \theta_1^i \theta_2^{jk} - \theta_1^i \theta_2^{jk} - \theta_1^i \theta_2^{jk} \\
		       &= \mathbb{E}[R_{\mathbb{I}(x=i)} R_y R_{\mathbb{I}(x=j)} R_y R_{\mathbb{I}(x=k)} R_y] \, \, \text{(as $ \bm{\theta_1} = 0 $ under the null)} \\
		\bm{B_{01}} &= \bm{B_{10}}^T \\
		B_{11}^{ij, kl} &= \mathbb{E}[(R_{\mathbb{I}(x=i)} R_y R_{\mathbb{I}(x=j)} R_y - \theta_2^{ij}) (R_{\mathbb{I}(x=k)} R_y R_{\mathbb{I}(x=l)} R_y - \theta_2^{kl})] \\
		       &= \mathbb{E}[(R_{\mathbb{I}(x=i)} R_y R_{\mathbb{I}(x=j)} R_y R_{\mathbb{I}(x=k)} R_y R_{\mathbb{I}(x=l)} R_y)] - \theta_2^{ij} \theta_2^{kl} - \theta_2^{ij} \theta_2^{kl} + \theta_2^{ij} \theta_2^{kl} \\
		       &= \mathbb{E}[(R_{\mathbb{I}(x=i)} R_y R_{\mathbb{I}(x=j)} R_y R_{\mathbb{I}(x=k)} R_y R_{\mathbb{I}(x=l)} R_y)] - \theta_2^{ij} \theta_2^{kl} 
	\end{split}
\end{equation}

Using these terms, we can now compute $ \bm{\sigma} $.
\begin{equation}
	\begin{split}
		\frac{\partial}{\partial \theta} g(\bm{\theta}) &= \begin{bmatrix} 
			\frac{\partial}{\partial \bm{\theta_1}} g(\bm{\theta}) & \frac{\partial}{\partial \bm{\theta_2}} g(\bm{\theta}) \\
								   \end{bmatrix} \\
		\left[ \frac{\partial}{\partial \bm{\theta_1}} g(\bm{\theta}) \right]^{j,l} &= \sum_{p=1}^{k-1} \theta_1^p \frac{\partial}{\partial \theta_1^j} \wedge^{p,j} + \wedge^{l,j} = \wedge^{l,j} \\
		\left[ \frac{\partial}{\partial \bm{\theta_2}} g(\bm{\theta}) \right]^{j, lm} &= \sum_{p=1}^{k-1} \theta_1^p \frac{\partial}{\partial \theta_2^{lm}} \wedge^{p,j} = 0\\
			% &= \begin{bmatrix}
			% 	\sum_{m=1}^{k-1} \theta_1^m \frac{\partial}{\partial \theta_1^j}\wedge^{ml} + \wedge^{jl} &
			% 	\sum_{m=1}^{k-1} \theta_1^m \frac{\partial}{\partial \theta_2^{jk}} \wedge^{ml}
			% \end{bmatrix}  \forall j,k,l \in \{1, \cdots, k-1\} \\
			\frac{\partial}{\partial \bm{\theta}} g(\bm{\theta}) &= \begin{bmatrix} \bm{\wedge} & \bm{0} \end{bmatrix}\\
			\bm{\sigma}^2 &= \bm{B_{00}} (\bm{\wedge})^2 = \bm{\Sigma} (\bm{\Sigma}^{-1/2})^2 = \mathbb{I}_{(k-1)}
	\end{split}
\end{equation}

Hence, $ \sqrt{n} g(\bm{\theta}) $ has an asymptotic $ k-1 $ variate standard normal distribution. Since, $ Q_2 = \sum (\sqrt{n} g(\bm{\theta}))^2 $, $ Q_2 $ is asymptotically chi-square distributed with $ k-1 $ degrees of freedom.

\subsection{Proof for proposition 3} 
The m-estimator for $ Q_3 $ is very similar to $ Q_2 $. We define the parameters, the m-estimator function, and $ g(\bm{\theta}) $ as:

\begin{equation}
	\begin{split}
		\bm{\theta}^T &= (\bm{\theta_1}, \bm{\theta_2}) \\
		\bm{\theta_1} &= (\theta_1^{pq})_{p=\{1, \cdots, (k-1) \}, q=\{1, \cdots, (r-1) \}} \\
		\bm{\theta_2} &= (\theta_2^{pqst})_{p, s = \{1, \cdots, k-1 \},  q, t = \{1, \cdots, r-1 \}} \\
		\Psi(X_i, Y_i, \mathbf{Z}_i; \bm{\theta}^T) &= 
		\begin{cases}
			R_{{\mathbb{I}(x=p)}_i}.R_{{\mathbb{I}(y=q)}_i} - \theta_1^{pq} \, \, \forall p \in \{ 1, \cdots, k-1 \}, q \in \{1, \cdots, r-1 \} \\
			(R_{{\mathbb{I}(x=p)}_i}.R_{{\mathbb{I}(y=q)}_i} \times R_{{\mathbb{I}(x=s)}_i}.R_{{\mathbb{I}(y=t)}_i}) - \theta_{2}^{pqst} \, \, \forall p, s \in \{ 1, \cdots, k-1 \}, q, t \in \{ 1, \cdots, r-1 \} \\
		\end{cases} \\
		g(\bm{\theta}) &= \bm{\theta_1}\bm{\wedge} \\
	\end{split}
\end{equation}

where $ \bm{\wedge} $ is defined as:
\begin{equation}
	\begin{split}
		\bm{\wedge} &= \bm{\Sigma}^{-1} \\
		\Sigma^{ps, qt} &= \mathbb{E}[(R_{\mathbb{I}(x=p)} R_{\mathbb{I}(y=q)})(R_{\mathbb{I}(x=s)}R_{\mathbb{I}(y=t)})] - \mathbb{E}[R_{\mathbb{I}(x=p)} R_{\mathbb{I}(y=q)}] \mathbb{E}[R_{\mathbb{I}(x=s)} R_{\mathbb{I}(y=t)}] \\
			    &= \theta_2^{pqst} - \theta_1^{pq} \theta_1^{st} \\
	\end{split}
\end{equation}

Solving the equation $ \sum_i \Psi(X_i, Y_i, \mathbf{Z}_i; \bm{\theta}^T) = 0 $ gives us:
\begin{equation}
	\begin{split}
		\theta_1^{pq} & = \mathbb{E}[R_{\mathbb{I}(x=p)} R_{\mathbb{I}(y=q)}] \\
		\theta_2^{pqst} &= \mathbb{E}[(R_{\mathbb{I}(x=p)} R_{\mathbb{I}(y=q)})(R_{\mathbb{I}(x=s)} R_{\mathbb{I}(y=t)})] \\
	\end{split}
\end{equation}

Similar to the last case as both $ \Psi $ and $ g(\bm{\theta}) $ are smooth functions with $ g(\bm{\theta}) = 0 $ under the null,

\begin{equation}
	\sqrt{n}g(\hat{\bm{\theta}}) \rightarrow \mathcal{N}(\bm{0}, \bm{\sigma}^2)
\end{equation}

The rest of the proof is very similar to the last case and we get the following values for the
matrices:
\begin{equation}
	\begin{split}
		A(\bm{\theta}) &= \mathbb{I}_{(k-1)(r-1)} \\
		B(\bm{\theta}) &= \begin{bmatrix}
			\bm{B_{00}} & \bm{B_{01}} \\
			\bm{B_{10}} & \bm{B_{11}} \\
				 \end{bmatrix} \\
		\bm{B_{00}} &= \bm{\Sigma} \\
		B_{10}^{lm, stuv} &= \mathbb{E}[R_{\mathbb{I}(x=l)} R_{\mathbb{I}(y=m)} R_{\mathbb{I}(x=s)} R_{\mathbb{I}(y=t)} R_{\mathbb{I}(x=u)} R_{\mathbb{I}(y=v)}] \\
		\bm{B_{01}} &= \bm{B_{10}}^T \\
		B_{11}^{lmpq, stuv} &= \mathbb{E}[(R_{\mathbb{I}(x=l)} R_{\mathbb{I}(y=m)})(R_{\mathbb{I}(x=p)} R_{\mathbb{I}(y=q)})(R_{\mathbb{I}(x=s)} R_{\mathbb{I}(y=t)})(R_{\mathbb{I}(x=u)} R_{\mathbb{I}(y=v)})] - \theta_2^{lmpq} \theta_2^{stuv} \\
		\frac{\partial}{\partial \theta} g(\bm{\theta}) &= \begin{bmatrix} \bm{\wedge} & \bm{0} \end{bmatrix} \\
		\bm{\sigma}^2 &= \mathbb{I}_{(k-1)(r-1)} \\
	\end{split}
\end{equation}

Hence, $ \sqrt{n} g(\bm{\theta}) $ has an asymptotic $ (k-1)(r-1) $ variate standard normal distribution. Since, $ Q_3 = \sum (\sqrt{n} g(\bm{\theta}))^2 $, $ Q_3 $ is asymptotically chi-square distributed with $ (k-1)(r-1) $ degrees of freedom.

\end{document}